\newtheorem{lemma}{Lemma}
\newtheorem{proposition}{Proposition}
\newtheorem{corollary}{Corollary}
\newtheorem{property}{Property}
\newtheorem{remark}{Remark}
\newtheorem{claim}{Claim}
\begin{document}

\title{\huge Green Robotic Mixed Reality with Gaussian Splatting}

\author{
Chenxuan Liu, He Li, Zongze Li, Shuai Wang$^{\dag}$, Wei Xu, Kejiang Ye,\\Derrick Wing Kwan Ng,~\emph{Fellow, IEEE}, and 
Chengzhong Xu,~\emph{Fellow, IEEE}
\thanks{
This work was supported by the National Natural Science Foundation of China (Grant No. 62371444), and the Shenzhen Science and Technology Program (Grant No. RCYX20231211090206005 and RCBS20231211090517022).

Chenxuan Liu, Shuai Wang, and Kejiang Ye are with the Shenzhen Institutes of Advanced Technology, Chinese Academy of Sciences, Shenzhen, China.
Chenxuan Liu is also with the University of Chinese Academy of Sciences. 
He Li and Chengzhong Xu are with the State Key Laboratory of IOTSC, Department of Computer and Information Science, University of Macau, Macau, China.
Zongze Li is with Peng Cheng Laboratory, Shenzhen, China.
Wei Xu is with the Manifold Tech Limited, Hong Kong, China.
Derrick Wing Kwan Ng is with the School of Electrical Engineering and Telecommunications, the University of New South Wales, Australia.

Corresponding author: Shuai Wang ({\tt\footnotesize s.wang@siat.ac.cn}). 
}
}
\maketitle
\thispagestyle{empty}
\pagestyle{empty}

\begin{abstract}
Realizing green communication in robotic mixed reality (RoboMR) systems presents a challenge, due to the necessity of uploading high-resolution images at high frequencies through wireless channels. This paper proposes Gaussian splatting (GS) RoboMR (GSRMR), which achieves a lower energy consumption and makes a concrete step towards green RoboMR. 
The crux to GSRMR is to build a GS model which enables the simulator to opportunistically render a photo-realistic view from the robot's pose, thereby reducing the need for excessive image uploads. 
Since the GS model may involve discrepancies compared to the actual environments, a GS cross-layer optimization (GSCLO) framework is further proposed, which jointly optimizes content switching (i.e., deciding whether to upload image or not) and power allocation across different frames. 
The GSCLO problem is solved by an accelerated penalty optimization (APO) algorithm. 
Experiments demonstrate that the proposed GSRMR reduces the communication energy by over $10$x compared with RoboMR. 
Furthermore, the proposed GSRMR with APO outperforms extensive baseline schemes, in terms of peak signal-to-noise ratio (PSNR) and structural similarity index measure (SSIM).
\end{abstract}
\begin{IEEEkeywords}
Gaussian splatting, mixed reality, robotics.
\end{IEEEkeywords}

\section{Introduction}

Robotic mixed reality (RoboMR) \cite{morra2019building,delmerico2022spatial,li2024seamless}, 
which facilitates interactions between real robots and virtual agents in a shared symbiotic world, emerges as a promising solution to train and test robot learning systems \cite{li2021igibson}.
However, RoboMR requires synchronization of digital twins, which involves transmission of high frequency high resolution sensor data (e.g., images). 
Therefore, establishing green (i.e. low-energy) communication between the server and robots is non-trivial \cite{li2021igibson,morra2019building,delmerico2022spatial,li2024seamless,guo2020adaptive, yu2004iterative,zheng2016wireless,bastug2017toward,chen2018vr}.
Yet, achieving low energy consumption is crucial to extending the operation time of RoboMR systems.

Existing RoboMR resource allocation approaches \cite{bastug2017toward,chen2018vr,Cakir2024IDTC,Cheng2024resource,Zhao2024Adaptive,Pan2023joint} fail in fully exploiting the memory information from previous RoboMR executions.
In addition, most results ignore the inter-dependency between high-level data encoding/decoding at the application layer and low-level communication design at the physical layer.
Consequently, existing methods lead to excessive communication energy during the RoboMR interaction.

This paper proposes the Gaussian splatting (GS) RoboMR paradigm, termed GSRMR, which reduces the energy consumption by orders of magnitude compared to the current RoboMR system.
GSRMR leverages a collection of historical images recorded at previous RoboMR executions, and to build a GS model that can opportunistically generate a photo-realistic view from the robot’s pose, thus avoiding excessive uploading of images. 
Nonetheless, the GS model may involve discrepancies compared to the actual environments, due to variations of illuminations and changes of objects. 
To this end, we further propose a GS cross-layer optimization (GSCLO) framework to jointly optimize content switching (i.e., deciding to upload image or not) and power allocation across different frames. 
The GSCLO problem is solved by an accelerated penalty optimization (APO) algorithm, which is shown to have a lower complexity than search-based algorithms. 

We implement RoboMR in the Robot Operation System (ROS) and evaluate the performance gain brought by GSRMR and APO compared to existing RoboMR systems \cite{delmerico2022spatial,li2024seamless} and resource allocation algorithms \cite{yu2004iterative,zheng2016wireless,bastug2017toward}. Specifically, the energy saving brought by the proposed GSRMR is over $90\,\%$ compared with RoboMR. 
The peak signal-to-noise ratio (PSNR) of GSRMR with APO is at least $6$\,dB higher than other GSRMR benchmarks.
It is found that the GS model indeed helps VR synchronization, and the proposed GS-RoboMR effectively avoids GS discrepancies.

The remainder of the paper is organized as follows. Section \ref{section2} presents the problem formulation of RoboMR. Section \ref{section3} presents the GSRMR framework and section \ref{section4} presents the APO algorithm. Experiments are demonstrated and analyzed in Section \ref{section5}. 
Finally, conclusions are drawn in Section \ref{section6}.

\begin{figure}[t]
    \centering
    \includegraphics[width=0.48\textwidth]{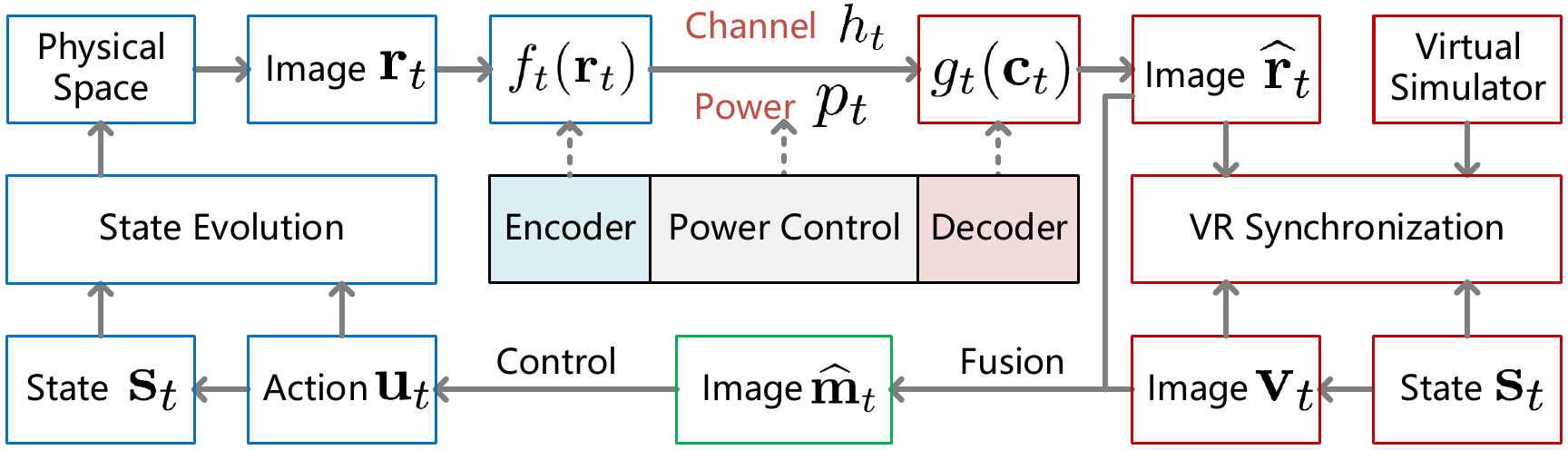}
    \caption{System model of RoboMR.}
    \label{fig:model}
    \vspace{-0.1in}
\end{figure}

\section{Problem Formulation}\label{section2}

\subsection{RoboMR System}
We consider a RoboMR system shown in Fig.~1, which consists of a mobile robot acting as a real agent and a simulation server rendering the virtual world. 
The system operation is divided into $T$ time slots, and the duration between consecutive slots is $\tau$.
At the $t$-th time slot, $t \in\{1, \ldots, T\}$, 
the robot state is ${\mathbf{s}}_t=({a}_{t},{b}_{t},{\theta}_{t})$, where $(a_t,b_t)$ and $\theta_t$ denote the position and orientation, respectively.
The robot control is $\mathbf{u}_t=(v_t,\phi_t)$, with $v_t$ and $\phi_t$ representing the linear and angular velocities, respectively.

The control vector $\mathbf{u}_t$ in RoboMR needs to avoid collision with both virtual and real agents in the MR image $\mathbf{m}_t$, which is given by
\begin{align}
\mathbf{m}_t= 
\mathbf{d}_t\odot \mathbf{r}_t+
    (\mathbf{1}-\mathbf{d}_t)
\odot\mathbf{v}_t, \label{mt}
\end{align}
where $\mathbf{r}_t\in\mathbb{R}^{lwc}$ and $\mathbf{v}_t\in\mathbb{R}^{lwc}$ are real and virtual camera images generated at the robot (sensing) and the server (simulating), respectively, with $l$, $w$, $c$ begin the length, width, and number of channels of images. 
Binary vector $\mathbf{d}_t$ is the mask of a real image for extracting the background and $(\mathbf{1}-\mathbf{d}_t)$ is the mask of a virtual image for extracting the virtual object. 
Symbol $\odot$ is the Hadamard product for element-wise vector multiplication. 


After the execution of $\mathbf{u}_t$ based on $\mathbf{m}_t$, the subsequent state is 
${\mathbf{s}_{t + 1}} = {\mathbf{s}_t} + F({\mathbf{s}_t},{\mathbf{u}_t})\tau$, where $F({\mathbf{s}_t},{\mathbf{u}_t})$ is the discrete-time kinematic model of the robot \cite{han2023rda}.
This completes one loop of RoboMR and the time index is updated as $t\leftarrow t+1$. 
The entire procedure terminates when $t=T$, and the robot learning dataset of the RoboMR is $\{\mathbf{m}_t,\mathbf{u}_t\}_{t=1}^T$.

\subsection{Green RoboMR System}

To generate $\mathbf{m}_t$ in \eqref{mt}, $\{\mathbf{r}_t\}_{t=1}^T$ needs to be uploaded over wireless communication, as the robot and simulator are physically isolated. 
This requires a pair of image encoder and decoder functions $(f_t,g_t)$, where the encoded vector $\mathbf{c}_t=f_t(\mathbf{r}_t)$ and the decoded vector
$\widehat{\mathbf{r}}_t=g_t(\mathbf{c}_t)$, 
such that $\widehat{\mathbf{r}}_t$ is close to $\mathbf{r}_t$.
Accordingly, the MR image is 
\begin{align}
\widehat{\mathbf{m}}_t= 
\mathbf{d}_t\odot \widehat{\mathbf{r}}_t+
    (\mathbf{1}-\mathbf{d}_t)
\odot\mathbf{v}_t. \label{mthat}
\end{align}
This $\widehat{\mathbf{m}}_t$ may differ from the desired image $\mathbf{m}_t$. 
To measure their difference, we adopt loss function 
 \begin{align}\label{LGS}
\mathcal{L}\left(\mathbf{m}_t,\widehat{\mathbf{m}}_t\right)
=(1-\lambda)\|\mathbf{m}_t-\widehat{\mathbf{m}}_t\|_1 +
  \lambda \mathcal{L}_{\mathsf{DSSIM}}(\mathbf{m}_t,\widehat{\mathbf{m}}_t),
\end{align}
where the DSSIM function $\mathcal{L}_{\mathsf{DSSIM}}$ is 
\begin{align}
\mathcal{L}_{\mathsf{DSSIM}}(\mathbf{m}_t,\widehat{\mathbf{m}}_t)
=
\frac{1}{1-\mathcal{L}_{\mathsf{SSIM}} (\mathbf{m}_t,\widehat{\mathbf{m}}_t)},
\end{align}
with $\mathcal{L}_{\mathsf{SSIM}}$ being the structural desimilarity index measure (SSIM) function detailed in \cite[Eqn. 5]{wang2011ssim}, 
and the weight $\lambda\in[0,1]$ is set to $\lambda=0.2$ according to \cite{kerbl20233d}.
To guarantee the fidelity of RoboMR, we must have 
$\frac{1}{T}\sum_{t=1}^T\mathcal{L}(\mathbf{m}_t,\widehat{\mathbf{m}}_t)\leq L_{\mathrm{th}}$, where $L_{\mathrm{th}}$ is a threshold to guarantee the desired quality of images. 
\footnote{To ensure satisfactory image qualities, the PNSR should be controlled above $25$\,dB  \cite{kerbl20233d}, which corresponds to a loss threshold of $L_{\mathrm{th}}$ below $0.05$.}

To guarantee successful transmission of $\mathbf{c}_t$, we must have $\tau R_{t}(p_{t})\geq D_t, \forall t$, where $R_{t}$ denotes the uplink achievable rate in bits/s, and $D_t$ in bits denotes the data volume of $\mathbf{c}_t$. 
Let $h_{t}\in\mathbb{C}$ and $|h_{t}|^2$ denote the uplink channel coefficient and gain from the robot to the server at time slot $t$. 
The rate $R_{t}$ at time slot $t$ is 
\begin{align}
R_{t}(p_{t})=B\mathrm{log}_2\left(1+\frac{|h_{t}|^2p_{t}}{\sigma^2}\right),
\end{align}
where $p_{t}\geq 0$ (with $\mathbf{p}=[p_{1},\cdots,p_{T}]^T$) and $B$ denote the transmit power and bandwidth of robot at time slot $t$, respectively, and $\sigma^2$ denotes the power of interference plus additive white Gaussian noise (AWGN) at the server.

The communication energy is minimized as follows:
\begin{align}
\mathsf{P}:\min_{\mathbf{p}\in\mathcal{P}, \{f_t,g_t\}}E(\mathbf{p}), \
\mathrm{s.t.}~
\frac{1}{T}\sum_{t=1}^T\mathcal{L}(\mathbf{m}_t,\widehat{\mathbf{m}}_t)\leq L_{\mathrm{th}},
\end{align}
where $E(\mathbf{p})=\tau\sum_{t=1}^Tp_t$ and $\mathcal{P}=\{\tau R_{t}(p_{t})\geq D_t, \, p_t\geq 0, \, \forall t \}$.
Note that $\widehat{\mathbf{m}}_t$ and $D_t$ are functions of $(f_t,g_t)$.

Existing image processing methods solve $\mathsf{P}$ by designing $\{f_t,g_t\}$ via compression or caching \cite{guo2020adaptive}. 
However, $D_t$ is still in the range of thousands of bytes. 
Existing power allocation methods, e.g., water-filling \cite{yu2004iterative}, max-min fairness \cite{zheng2016wireless}, solve $\mathsf{P}$ by adjusting $\mathbf{p}$ according to different channels. 
These approaches ignore the requirements of RoboMR.
Note that all above methods fail in exploiting the memory information from previous RoboMR executions. This implies that a lower energy is achievable for RoboMR.

\section{Gaussian Splatting RoboMR}\label{section3}

\begin{figure}[t]
    \centering
    \includegraphics[width=0.45\textwidth]{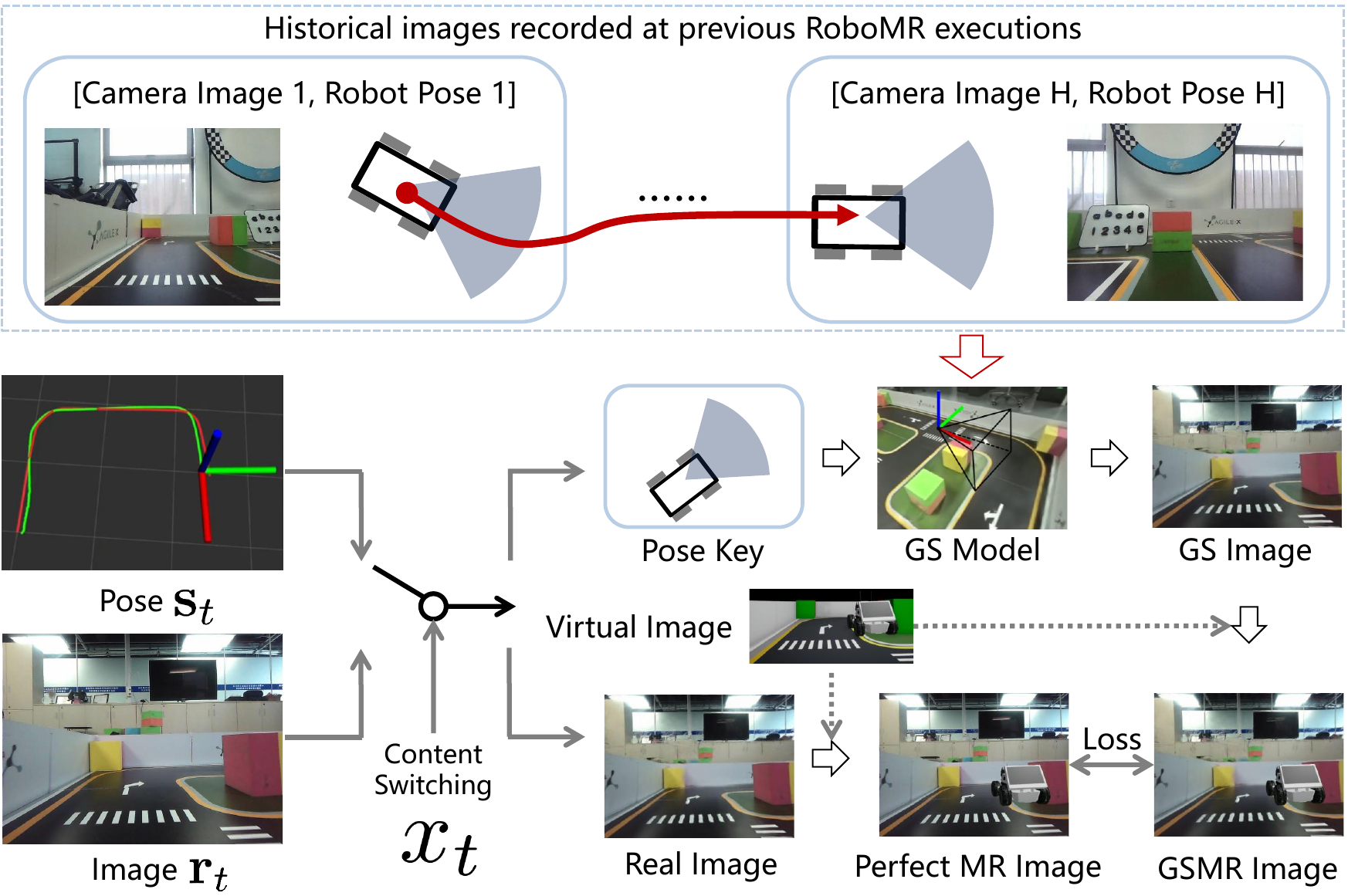}
    \caption{Architecture of GS-RoboMR.}
    \label{fig:GS}
    \vspace{-0.15in}
\end{figure}

This paper proposes a memory-assisted RoboMR paradigm, termed GSRMR. As shown in Fig.~2, the memory is a collection of $H$ historical images recorded at previous RoboMR executions. Leveraging these memories, we build a GS model that can render a photo-realistic image from a new viewpoint adjacent to the recording position. 
In particular, the GS model, $\Phi$, is operated on a given robot pose $\mathbf{s}_t$, and outputs a synthesis image $\mathbf{y}_t=\Phi(\mathbf{s}_t)$. 
If $\mathbf{y}_t$ is close to $\mathbf{r}_t$, then the robot only needs to upload a pose \emph{key} $\mathbf{s}_t$ to inform its viewpoint, without any image uploading.
This key is only tens of bytes and is orders of magnitude smaller than the image data volume.

However, the GS model may involve discrepancies compared to the actual environments (i.e., $\mathbf{y}_t$ mismatches $\mathbf{r}_t$), due to the variations in illuminations and changes of objects. 
To avoid such discrepancies, we further propose a content switching scheme shown in Fig.~\ref{fig:GS}.
Specifically, the proposed method adopts 
	\begin{align}\label{E}
		f_t(\mathbf{r}_t) = x_t\mathbf{r}_t + (1-x_t)\mathbf{s}_t, 
	\end{align}
where $x_t\in\{0,1\}$ (with $\mathbf{x}=[x_1,\cdots,x_T]^T$) denotes the content switching variable to be optimized, $x_t=1$ represents transmitting image, and $x_t=0$ represents transmitting pose. 
As such, the data volume becomes $D_t=x_t I + (1-x_t)S$, where $I$ and $S$ in bits (with $S\ll I$) are the data volume of image and pose, respectively. 
Accordingly, the decoder function adopts 
\begin{align}\label{D}
		g_t(\mathbf{c}_t)=x_t\mathbf{c}_t + (1-x_t)\Phi(\mathbf{c}_t).
\end{align}

Putting \eqref{E} and \eqref{D} into $\mathsf{P}$, the energy minimization problem under GS-RoboMR is formulated as 
	\begin{subequations}
		\label{P2}
		\begin{align}
			\mathsf{P}_{\mathrm{GS}}:\,\,\,
			\min_{\mathbf{p},\mathbf{x}
			}~&\tau\sum_{t=1}^{T}p_{t} \label{P2a}  \\
			\textrm{s.t.} ~~ & \tau R_{t}(p_{t})\geq x_t I + (1-x_t)S,\ \forall t, \label{Pc_GS}
 \\  
     &
\frac{1}{T}\sum_{t=1}^TL_t
(1-x_t)\leq L_{\mathrm{th}},
   \label{Pd_GS}
   \\
   & p_{t}\geq 0, \ x_t\in\{0,1\}, \ \forall t, \label{Px_GS}
		\end{align}
	\end{subequations}
where we have defined 
\begin{align}
L_t:=\mathcal{L}\Big(
&
\mathbf{d}_t\odot\mathbf{r}_t+
(\mathbf{1}-\mathbf{d}_t)
\odot\mathbf{v}_t, 
\nonumber\\
&
\mathbf{d}_t\odot \Phi(\mathbf{s}_t) +
(\mathbf{1}-\mathbf{d}_t)
\odot\mathbf{v}_t
\Big).
\end{align}

Problem $\mathsf{P}_{\mathrm{GS}}$ is defined as the GSCLO problem, since it leverages the GS model to design cross-layer variables $\{x_t,p_t\}$.
Note that $\mathsf{P}_{\mathrm{GS}}$ is a mixed integer nonlinear programming (MINLP) problem, which can be solved by branch-and-bound (B\&B) \cite{diamond2016cvxpy}. 
However, the complexity is exponential in $T$, which is time-consuming when $T$ is large. 
On the other hand, $\mathsf{P}_{\mathrm{GS}}$ can be addressed by continuous relaxation plus rounding \cite{hubner2014rounding}, but this would lead to non-negligible performance loss. Therefore, we develop a fast near-optimal algorithm for solving $\mathsf{P}_{\mathrm{GS}}$ in Section~\ref{section4}.

\section{Accelerated Penalty Optimization}\label{section4}

\subsection{Penalty for Binary Constraints}

To tackle the discontinuity, we first relax the binary constraint $x_{t}\in\{0,1\}$ into a linear constraint $0 \leq x_{t} \leq 1$, $\forall t$. 
However, the relaxation is not tight. 
To promote a binary solution for the relaxed variable $\{x_{t}\}$, we augment the objective function with a penalty term as in \cite{rinaldi2009new}. The associated reformulation with regularized penalty term of $\mathsf{P_{GS}}$ is given by
	\begin{subequations}
		\label{P3}
		\begin{align}
			\mathsf{P}_{\mathrm{Penalty}}:\,\,\,
			\min_{\mathbf{p}\succeq \mathbf{0},\ \mathbf{0} \preceq \mathbf{x} \preceq \mathbf{1}
			}~&\tau\sum_{t=1}^{T}p_{t} + \varphi(\mathbf x) \label{PPenaltya}  \\
			\textrm{s.t.} ~~ & 
            \textsf{constraints }(\ref{Pc_GS}), (\ref{Pd_GS}),
		\end{align}
	\end{subequations}
where $\varphi(\mathbf x)$ is a penalty function to penalize the violation of the zero-one integer constraints. A celebrated penalty function was introduced in \cite{lucidi2010exact}, where the penalty function is set as
\begin{align} \label{eq:penaltyterm}
    \varphi(\mathbf x)=\frac{1}{\beta}\sum_{t=1}^T x_{t}(1-x_t),
\end{align}
where $\beta>0$ is the penalty parameter.
According to \cite[Proposition 1]{lucidi2010exact}, with the penalty term in (\ref{eq:penaltyterm}), there exists an upper bound $\bar\beta>0$ such that for any $\beta\in[0,\bar\beta]$, $\mathsf{P}_{\mathrm{Penalty}}$ and $\mathsf{P_{GS}}$ have the same minimum points, i.e., $\mathsf{P}_{\mathrm{Penalty}}$ and $\mathsf{P_{GS}}$ are equivalent with a proper choice of $\beta$ \cite{lucidi2010exact}.

\subsection{Difference of Convex Algorithm}

In problem $\mathsf{P}_{\mathrm{Penalty}}$, the constraints in \eqref{Pc_GS} are convex, since the function $-\mathrm{log}_2(\cdot)$ is convex.
Besides, the constraints in \eqref{Pd_GS} are all affine. 
The only nonconvex term in the objective function is the penalty $\varphi(\mathbf x)$. 
However, by expanding $\varphi(\mathbf x)$ as 
$\varphi(\mathbf x) =
\frac{1}{\beta}\sum_{t=1}^T x_{t}- \frac{1}{\beta}\sum_{t=1}^T x_{t}^2
$, we find that $\varphi(\mathbf x)$ is difference of convex (DC) function \cite{an2005dc}, and such DC functions enjoy excellent surrogate properties. 
Specifically, given a certain solution $\{x_{t}^{\star}\}$,
we apply the first-order Taylor expansion on $\varphi(\mathbf x)$ and obtain 
$\varphi(\mathbf x)\approx \widehat{\varphi}(\mathbf x|\mathbf x^\star )$, where 
\begin{align}
&\widehat{\varphi}(\mathbf x|\mathbf x^\star ) = 
\sum_{t=1}^T 
\left(
\frac{1}{\beta}x_{t}-\frac{2}{\beta}x_{t}^{\star}x_{t}+\frac{1}{\beta}x_{t}^{\star^2}
\right).
\end{align}
Using the properties of DC functions, the following proposition can be established.

\begin{proposition}
$\widehat{\varphi}(\mathbf x|\mathbf x^\star )$ satisfy the following conditions:

\noindent(i) Upper bound: $\widehat{\varphi}(\mathbf x|\mathbf x^\star )
\geq \varphi(\mathbf x)$ for any $\mathbf x$.

\noindent(ii) Convexity: $\widehat{\varphi}(\mathbf x|\mathbf x^\star )$ is convex in $\mathbf{x}$.

\noindent(iii) 
Equality:
$[\widehat{\varphi}(\mathbf x^\star|\mathbf x^\star ), \nabla\widehat{\varphi}(\mathbf x^\star|\mathbf x^\star )]
=[\varphi(\mathbf x^\star),\nabla\varphi(\mathbf x^\star)]$.
\end{proposition}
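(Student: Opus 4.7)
The plan is to exploit the DC decomposition $\varphi(\mathbf{x}) = g(\mathbf{x}) - h(\mathbf{x})$ already written down in the paper, where $g(\mathbf{x}) = \frac{1}{\beta}\sum_{t} x_{t}$ is affine and $h(\mathbf{x}) = \frac{1}{\beta}\sum_{t} x_{t}^{2}$ is a separable convex quadratic with gradient $\nabla h(\mathbf{x}^\star) = \frac{2}{\beta}\mathbf{x}^\star$. A quick rearrangement of the formula for $\widehat{\varphi}(\mathbf{x}\,|\,\mathbf{x}^\star)$ shows that it is exactly $g(\mathbf{x})$ minus the first-order Taylor expansion of $h$ at $\mathbf{x}^\star$, i.e., $\widehat{\varphi}(\mathbf{x}\,|\,\mathbf{x}^\star) = g(\mathbf{x}) - h(\mathbf{x}^\star) - \nabla h(\mathbf{x}^\star)^{\top}(\mathbf{x}-\mathbf{x}^\star)$. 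Once this structural identification is recorded, all three parts of the proposition follow from standard properties of convex functions and linearizations.

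For part (i) I would invoke the first-order convexity inequality for the differentiable convex function $h$, namely $h(\mathbf{x})\ge h(\mathbf{x}^\star)+\nabla h(\mathbf{x}^\star)^{\top}(\mathbf{x}-\mathbf{x}^\star)$, and then subtract both sides from $g(\mathbf{x})$; the right-hand side becomes $\widehat{\varphi}(\mathbf{x}\,|\,\mathbf{x}^\star)$ by the structural identification, while the left-hand side is $g(\mathbf{x})-h(\mathbf{x}) = \varphi(\mathbf{x})$, yielding the desired inequality. For part (ii) I would observe that $\widehat{\varphi}(\mathbf{x}\,|\,\mathbf{x}^\star)$ is the sum of the affine term $\frac{1}{\beta}\sum_{t}x_{t}$, the linear term $-\frac{2}{\beta}\sum_{t}x_{t}^{\star}x_{t}$, and the $\mathbf{x}$-independent constant $\frac{1}{\beta}\sum_{t}(x_{t}^{\star})^{2}$; being affine in $\mathbf{x}$, it is automatically convex (in fact also concave), which is strictly stronger than what the proposition requires. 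For part (iii) I would substitute $\mathbf{x}=\mathbf{x}^\star$ and note that the constant term cancels one copy of the cross term, leaving $\frac{1}{\beta}\sum_{t}x_{t}^{\star}(1-x_{t}^{\star}) = \varphi(\mathbf{x}^\star)$; for the gradient, componentwise differentiation gives $\partial \widehat{\varphi}/\partial x_{t} = \frac{1}{\beta}-\frac{2}{\beta}x_{t}^{\star}$, which coincides with $\partial \varphi/\partial x_{t}$ evaluated at $\mathbf{x}=\mathbf{x}^\star$. This is nothing but the defining tangency property of the first-order Taylor expansion.

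There is no substantive obstacle in this argument; the proposition is essentially a textbook restatement of the DC majorization--minimization construction applied to a separable quadratic. The only bookkeeping to watch is the factor of two produced when differentiating $x_{t}^{2}$ and the additive constant $\frac{1}{\beta}(x_{t}^{\star})^{2}$ that must be retained in the surrogate in order to ensure agreement with $\varphi$ in \emph{value} (and not only in gradient) at the expansion point. With these sign and constant conventions verified, each of the three items collapses to a one-line check.
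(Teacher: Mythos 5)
Your proposal is correct and follows essentially the same route as the paper: the paper proves part (i) by directly computing $\widehat{\varphi}(\mathbf x|\mathbf x^\star)-\varphi(\mathbf x)=\frac{1}{\beta}\sum_{t=1}^{T}(x_t-x_t^\star)^2\geq 0$, which is exactly the residual of the first-order convexity inequality you invoke for the quadratic $h(\mathbf x)=\frac{1}{\beta}\sum_t x_t^2$, and parts (ii) and (iii) are handled identically (affineness of the surrogate, hence zero Hessian, and direct substitution/differentiation at $\mathbf x^\star$). Your abstract DC packaging is a slightly more general way of saying the same thing, but the underlying calculation coincides with the paper's.
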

\begin{proof}
Part (i) is proved by checking 
\begin{align}
\widehat{\varphi}(\mathbf x|\mathbf x^\star )-\varphi(\mathbf x)
&=
\frac{1}{\beta}\sum_{t=1}^T x_{t}^2
-
\sum_{t=1}^T\frac{2}{\beta}x_{t}^{\star}x_{t}
+\sum_{t=1}^T
\frac{1}{\beta}x_{t}^{\star^2}
\nonumber
\\
&=
\frac{1}{\beta}\sum_{t=1}^T
\left(x_{t}-x_{t}^{\star}\right)^2\geq 0.
\end{align}
Part (ii) is proved by checking the semi-definiteness of the Hessian of $\widehat{\varphi}$.
In particular, $\nabla^2_{\mathbf x}\widehat{\varphi}=\mathbf{0}$, which is semi-definite.
Part (iii) is proved by computing and comparing the function and gradient values of  $\widehat{\varphi}$ and $\varphi$.
\end{proof}
Based on part (i) of \textbf{Proposition 1}, an upper bound problem, denoted as $\mathsf{P}_{\mathrm{Penalty}}'$, can be directly
obtained if we replace the function $\varphi(\mathbf x)$ by $\widehat{\varphi}(\mathbf x|\mathbf x^\star )$ expanded around a feasible point $\mathbf x^\star$ for the problem $\mathsf{P}_{\mathrm{Penalty}}$.
Based on part (ii) of \textbf{Proposition 1}, this $\mathsf{P}_{\mathrm{Penalty}}'$ is guaranteed to be a solvable convex problem.
Moreover, according to part (iii) of \textbf{Proposition 1}, a tighter upper bound can be achieved if we treat the obtained solution as another feasible point and continue
to construct the next round surrogate function. 
This leads to the DC algorithm \cite{an2005dc}, which solves a sequence of convex optimization problems 
$\{\mathsf{P}_{\mathrm{DC}}^{[0]},\mathsf{P}_{\mathrm{DC}}^{[1]},\mathsf{P}_{\mathrm{DC}}^{[2]},\cdots\}$, where $\mathsf{P}_{\mathrm{DC}}^{[n+1]}$ is the problem in the $(n+1)$-th iteration of the DC algorithm, and is given by 
		\begin{align}\label{Pt+1}
			\mathsf{P}_{\mathrm{DC}}^{[n+1]}:\,\,\,
			\min_{\mathbf{p}\succeq \mathbf{0},\ \mathbf{0} \preceq \mathbf{x} \preceq \mathbf{1}
			}~&\tau\sum_{t=1}^{T}p_{t} + 
   \widehat{\varphi}(\mathbf x|\mathbf x^{[n]} ) \nonumber
   \\
			\textrm{s.t.} ~~ & \textsf{constraints }(\ref{Pc_GS}), (\ref{Pd_GS}).
		\end{align}
Here, $\mathbf{x}^{[n]}=[x_{1}^{[n]},\cdots,x_{T}^{[n]}]^T$ is the optimal solution of $\mathbf{x}$ to $\mathsf{P}_{\mathrm{DC}}^{[n]}$.
Each $\mathsf{P}_{\mathrm{DC}}^{[n+1]}$ is a convex problem and can be solved via off-the-shelf toolbox (e.g., CVXPY) with a complexity of $\mathcal{O}((2T)^{3.5})$.
According to \textbf{Proposition 1} and \cite{abbaszadehpeivasti2024rate}, any limit point of the sequence $\{(\mathbf{p}^{[n]}, \mathbf{x}^{[n]}\}_{n=0,1,\cdots}$ is a Karush-Kuhn-Tucker (KKT) solution to the problem $\mathsf{P}_{\mathrm{Penalty}}$ as long as the starting point 
$(\mathbf{p}^{[0]}, \mathbf{x}^{[0]})$ is feasible. The total complexity of APO is thus $\mathcal{O}(N(2T)^{3.5})$, where 
$N$ is the number of iterations required for the APO to converge.

\subsection{Ranking-based Initialization}

The APO algorithm requires a feasible initialization. 
A naive initialization is $\{x_{t}=1,\forall t\}$. 
However, this easily leads to slow convergence of the DC algorithm. 
Here, we propose a ranking-based algorithm to accelerate the convergence while guaranteeing feasibility.
Specifically, the ranking algorithm sorts the list of GS losses $\{L_t\}_{t=1}^T$ in the ascending order, and the reordered list is denoted as $\{L_{w(1)}, L_{w(2)},\cdots\}$, where $w(j)$ is an index mapping $j\rightarrow w(j)$ such that 
$L_{w(j)}\leq 
L_{w(j+1)}$ for all $j$ and $w(1)$ is the index corresponding to the smallest value among all $\{L_t\}$. 
The ranking solution is given by:
\begin{equation}\label{ranking}
(x_t^\star,p_t^\star)=
\left\{\begin{array}{ll}
(0,\frac{\sigma^2}{|h_{t}|^2}
2^{
\frac{S}{\tau B}
}-\frac{\sigma^2}{|h_{t}|^2}), \ &\mathrm{if}~w(t)\leq \mu \\
(1,\frac{\sigma^2}{|h_{t}|^2}
2^{
\frac{I}{\tau B}
}-\frac{\sigma^2}{|h_{t}|^2}), \ &\mathrm{if}~w(t)> \mu
\end{array}\right.,
\end{equation} 
where 
\begin{align}
\mu=
\mathop{\mathrm{min}}
\left\{\sum_{t=1}^T x_t^\star:
\frac{1}{T}\sum_{t=1}^TL_t
(1-x_t^\star)\leq L_{\mathrm{th}}\right\}.
\end{align}

The insight of solution \eqref{ranking} is that $L_t$ represents how much loss is added when switching from image to pose, and $x_t=0$ should be assigned to the frame that guarantees minimum loss to avoid GS discrepancies. 
The following proposition can be established to confirm the above insight.

\begin{proposition}
The ranking-based solution $\{x_t^\star,p_t^\star\}$ is optimal to $\mathsf{P_{GS}}$ if $|h_1|^2=\cdots=|h_T|^2$.
\end{proposition}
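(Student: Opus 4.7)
The plan is to show that under the equal-channel assumption, problem $\mathsf{P}_{\mathrm{GS}}$ collapses to a discrete selection problem for which a greedy rule based on sorted $L_t$ is provably optimal, and then to verify that this rule coincides with the ranking-based solution in \eqref{ranking}.

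First, I would observe that at any optimum of $\mathsf{P}_{\mathrm{GS}}$ the rate constraint \eqref{Pc_GS} must be tight for every $t$, because otherwise one could decrease $p_t$ and strictly reduce the objective while keeping \eqref{Pd_GS} and \eqref{Px_GS} unaffected. Letting $|h_t|^2 = h^2$ for all $t$ and solving the tight rate constraint for $p_t$ as a function of $x_t \in \{0,1\}$ gives
\begin{align}
p_t(x_t) = \frac{\sigma^2}{h^2}\bigl(2^{[x_t I + (1-x_t)S]/(\tau B)} - 1\bigr)
= x_t p_I + (1-x_t) p_S,
\end{align}
where $p_I$ and $p_S$ are the constants appearing in \eqref{ranking}. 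Substituting into the objective yields
\begin{align}
\tau \sum_{t=1}^T p_t(x_t) = T\tau p_S + \tau (p_I - p_S)\sum_{t=1}^T x_t,
\end{align}
with $p_I > p_S$ since $I > S$. Hence, once the power variables are eliminated, $\mathsf{P}_{\mathrm{GS}}$ reduces to minimizing $\sum_t x_t$ over $\mathbf{x}\in\{0,1\}^T$ subject to $\sum_t L_t (1-x_t) \leq T L_{\mathrm{th}}$.

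Next, I would recast this as maximizing $|Z|$ where $Z = \{t: x_t=0\}$ subject to $\sum_{t\in Z} L_t \leq T L_{\mathrm{th}}$. The main step is an exchange argument: suppose $Z^\star$ is an optimal index set and let $Z_{\mathrm{rank}}$ be the index set produced by selecting the $|Z^\star|$ smallest values among $\{L_t\}$. If $Z^\star \neq Z_{\mathrm{rank}}$, pick $t_1 \in Z_{\mathrm{rank}}\setminus Z^\star$ and $t_2 \in Z^\star \setminus Z_{\mathrm{rank}}$; by construction $L_{t_1}\leq L_{t_2}$, so replacing $t_2$ by $t_1$ in $Z^\star$ keeps feasibility and the cardinality unchanged. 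Iterating this swap transforms $Z^\star$ into $Z_{\mathrm{rank}}$ without loss, proving that choosing the $\mu$ smallest $L_t$ values is optimal, where $\mu$ is exactly the smallest cardinality for which the loss budget is satisfied---matching the definition of $\mu$ in \eqref{ranking}.

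Finally, I would translate this set-based optimum back into the form of \eqref{ranking}: the indices $t$ with $w(t)\leq \mu$ are precisely those sorted into the $\mu$ smallest losses, and they receive $x_t^\star = 0$, while the rest receive $x_t^\star = 1$; the corresponding $p_t^\star$ values then follow from $p_t(x_t^\star)$. The main obstacle is really just making the exchange argument airtight in boundary cases (ties among $\{L_t\}$ and the definition of $\mu$ when the constraint is tight), but any tie-breaking rule consistent with $w(\cdot)$ yields the same optimal value, so optimality is preserved.
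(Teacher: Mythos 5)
Your proof follows essentially the same route as the paper's: tightness of the rate constraint \eqref{Pc_GS} to eliminate $\mathbf{p}$, reduction to minimizing $\sum_t x_t$ under the loss budget, and an exchange/dominance argument showing the smallest-loss frames should receive $x_t=0$. Your write-up is in fact slightly more careful than the paper's sketch (you retain the $p_S$ offset in the reduced objective and make the swap argument explicit), but it is the same proof in substance.
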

\begin{proof}
First, it can be proved that constraint \eqref{Pc_GS} always activates at the optimal $\{x_t^*,p_t^*\}$.
This gives $p_t^* = \frac{\sigma^2}{|h_{t}|^2}
2^{
\frac{I}{\tau B}
}-\frac{\sigma^2}{|h_{t}|^2}$ if $x_t^*=1$ and 
$p_t^* = \frac{\sigma^2}{|h_{t}|^2}
2^{
\frac{S}{\tau B}
}-\frac{\sigma^2}{|h_{t}|^2}$ if $x_t^*=0$.
Putting the above equations and $|h_1|^2=\cdots=|h_T|^2$ into $\mathsf{P_{GS}}$, $\mathsf{P_{GS}}$ is equivalent to
	\begin{subequations}
		\begin{align}
			\min_{x_t\in\{0,1\},\forall t
			}~&\tau\left(\frac{\sigma^2}{|h_{t}|^2}
2^{
\frac{I}{\tau B}
}-\frac{\sigma^2}{|h_{t}|^2}\right)\sum_{t=1}^{T}x_{t} \label{P2a}  \\
			\textrm{s.t.} ~~ &
\frac{1}{T}\sum_{t=1}^TL_t
(1-x_t)\leq L_{\mathrm{th}}.
		\end{align}
	\end{subequations}
It can be seen that the objective function is linear in $\mu$.
Second, for any $(t,j)$ with $L_t\geq L_{j}$, the solution with $(x_t,x_j)=(1,0)$ always dominates that with $(x_t,x_j)=(0,1)$. With the two properties, the proof can be completed by contradiction.
\end{proof}

\textbf{Proposition 2} shows that $\{x_{t}^{[0]}=x_{t}^\star,p_t^{[0]}=p_t^\star\}$ still serves as a good initialization for APO. 
\textbf{Proposition 2} also shows that the performance of ranking-based algorithm depends on the variation of $\{h_t\}$. 
Particularly, if the equal channel gain assumption is not satisfied, the ranking-based algorithm is in general suboptimal to $\mathsf{P}_{\mathrm{GS}}$. 

\section{Experiments}\label{section5}

\begin{figure}[!t]
	\centering
	\begin{subfigure}{0.45\linewidth}
		\centering
		\includegraphics[width=\linewidth]{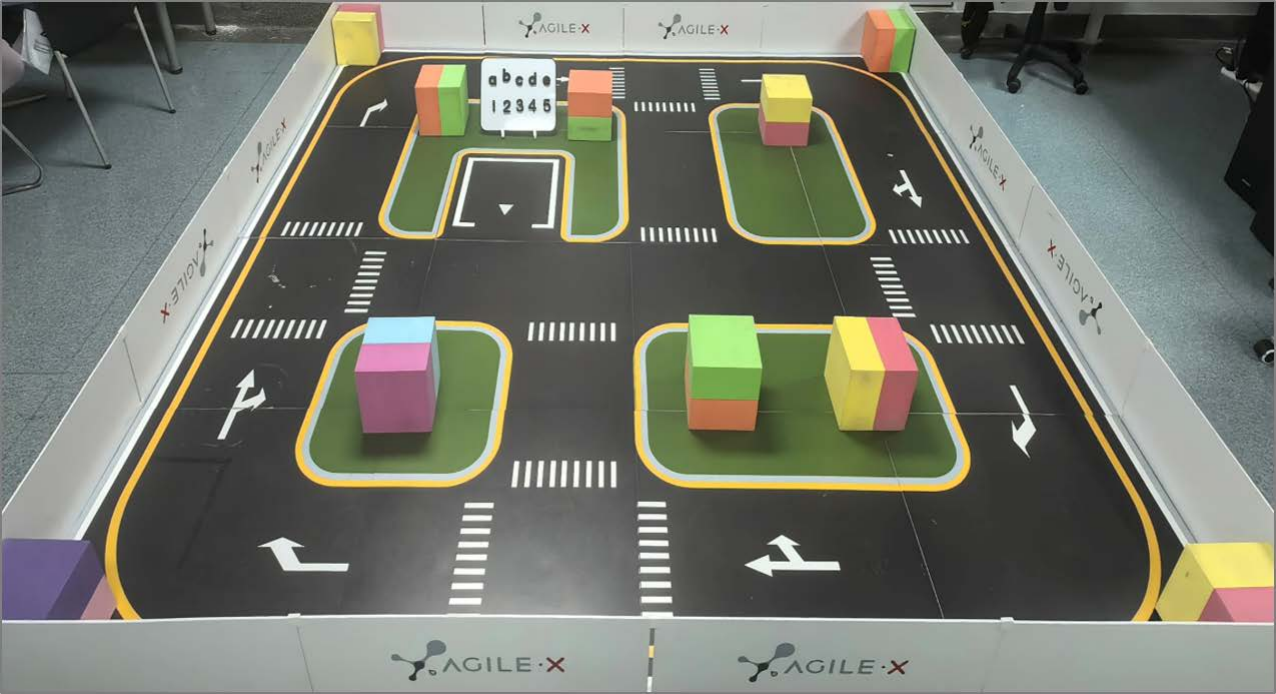}
		\caption{Real AGX.}
	\end{subfigure}
    	\begin{subfigure}{0.45\linewidth}
		\centering
		\includegraphics[width=\linewidth]{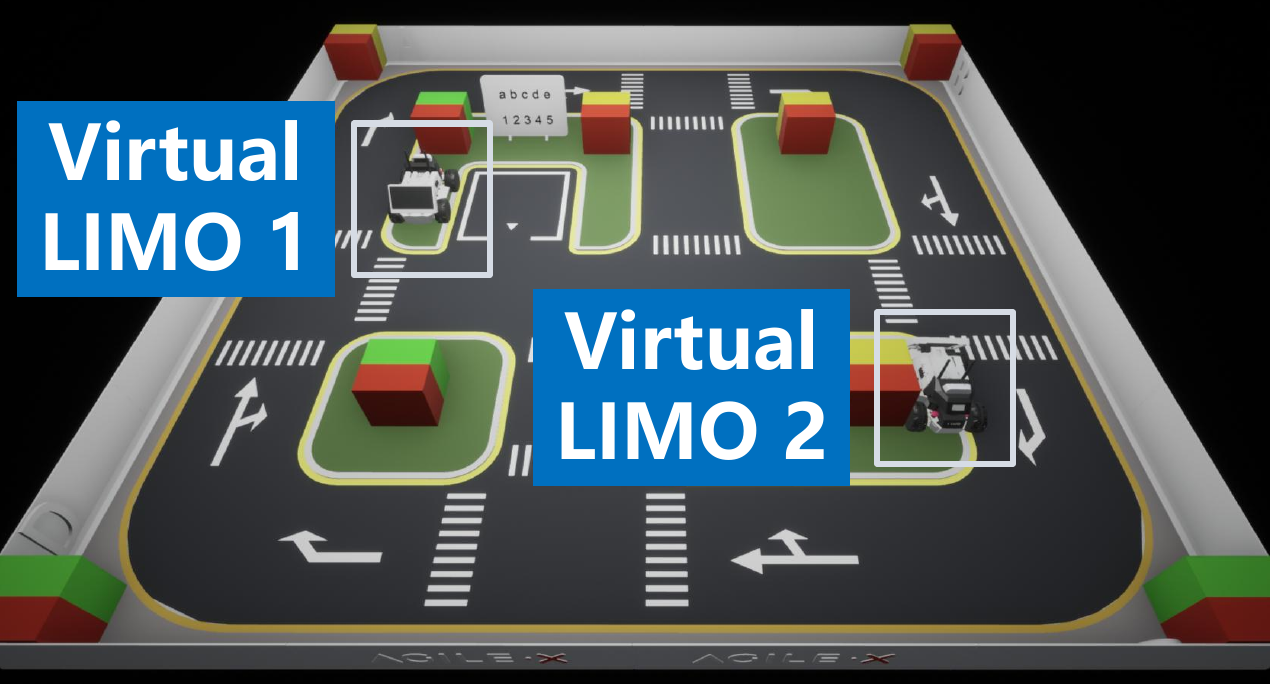}
		\caption{Virtual AGX.}
	\end{subfigure}
    	\begin{subfigure}{0.45\linewidth}
		\centering
		\includegraphics[width=\linewidth]{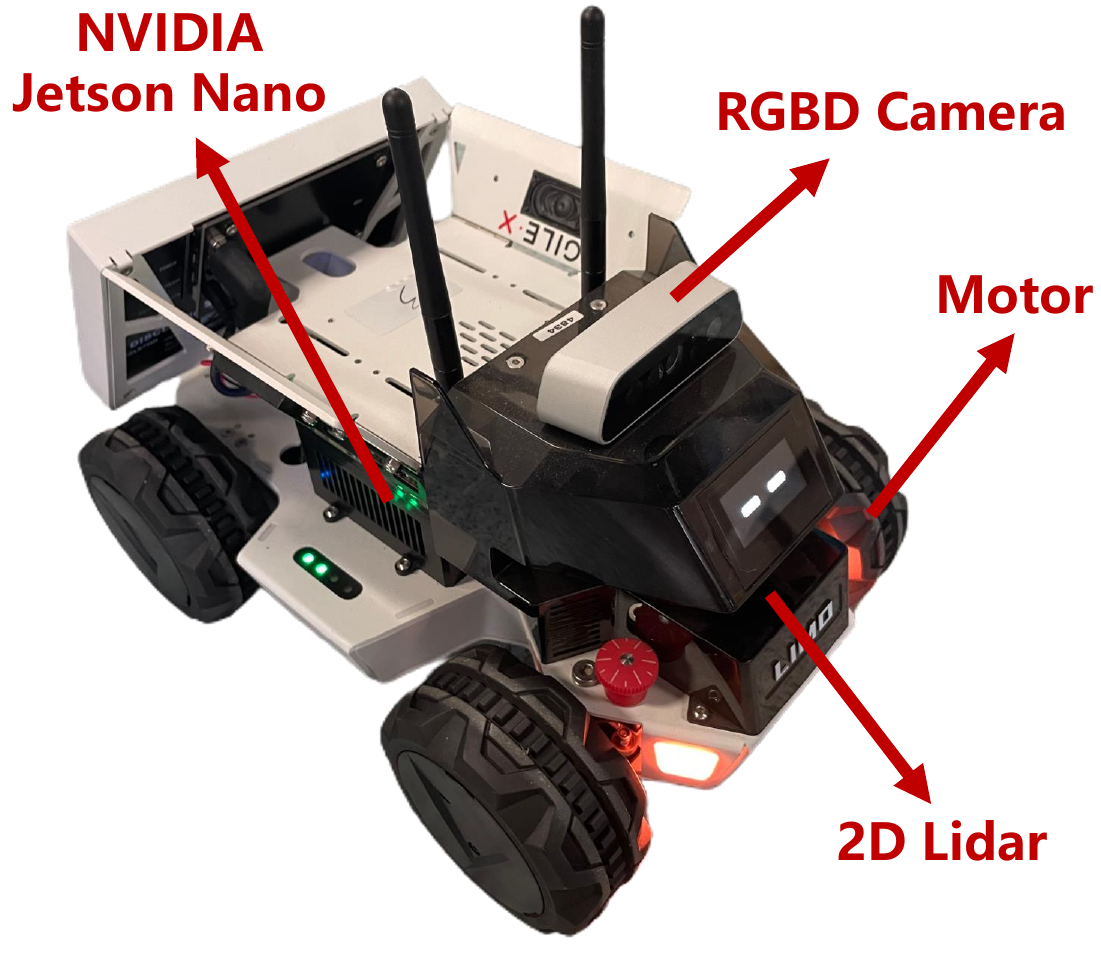}
		\caption{LIMO.}
	\end{subfigure}
    	\centering
	\begin{subfigure}{0.44\linewidth}
		\centering
		\includegraphics[width=\linewidth]{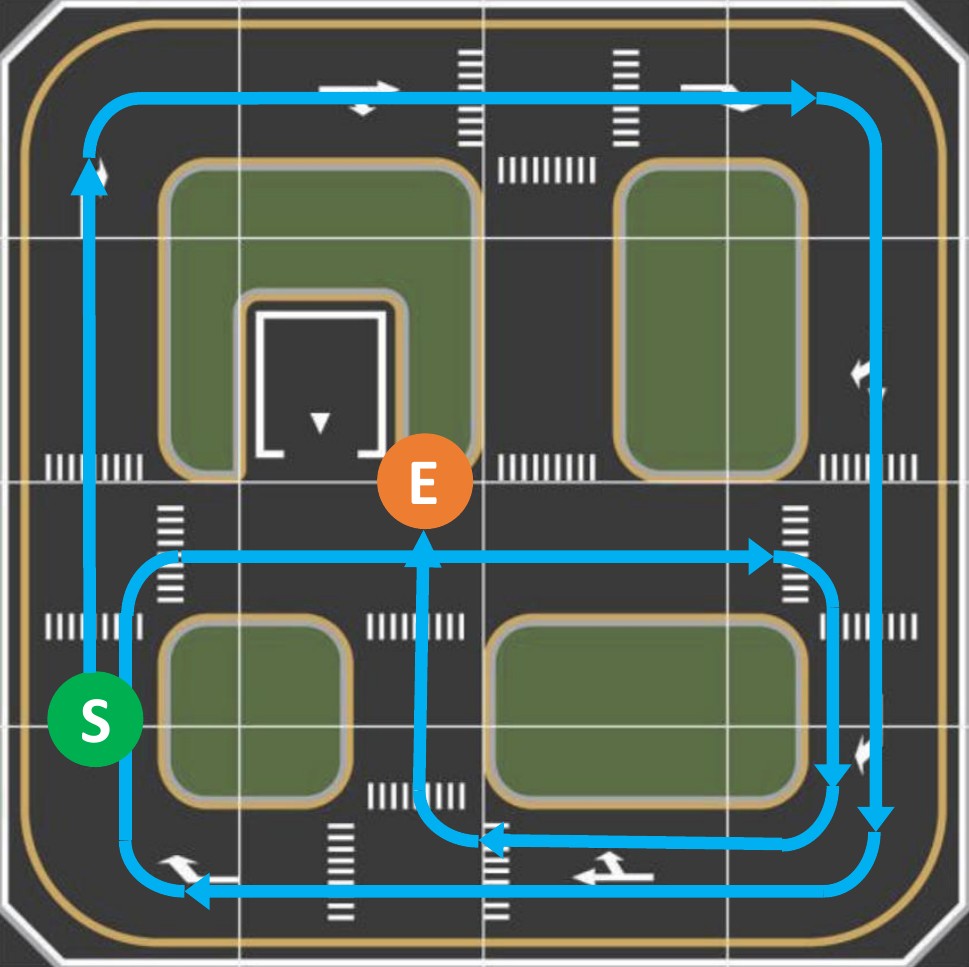}
      \vspace{-0.2in}
		\caption{Robot route.}
	\end{subfigure}
    \vspace{-0.05in}
	\caption{Implementation of the RoboMR platform. }
	\label{platforms}
    \vspace{-0.15in}
\end{figure}

We implement the RoboMR system exploiting C++ and Python in ROS. 
The real $4\,\text{m}\times 4\,\text{m}$ Agilex (AGX) sandbox platform is shown in Fig.~\ref{platforms}a.
The virtual world in Fig.~\ref{platforms}b is constructed using the CARLA simulator \cite{carla}, which acts as a digital twin of the real AGX platform, but two additional virtual objects (marked in white boxes) are added to the virtual world. 
The CARLA simulator is implemented on a Ubuntu workstation with a $3.7$\,GHZ AMD Ryzen 9 5900X CPU and an NVIDIA $3090$\,Ti GPU. 
We adopt a robot named LIMO shown in Fig.~\ref{platforms}c, which has a 2D lidar, an RGBD camera, and an onboard NVIDIA Jetson Nano computing platform. 
The robot route is marked as the blue line in Fig.~\ref{platforms}d, where green and orange boxes represent start and end positions.

We navigate the LIMO robot along the route for two rounds. 
In the first round, the robot collects $H$ frames (including images and poses) for training a GS model \cite{kerbl20233d}. 
In the second round, the robot collects $T$ frames for evaluation, with $\tau=0.1$\,s, $I=67.2$\,Kbits, $S=192$\,bits, and $T=288$.

The bandwidth $B=1$\,MHz and $\sigma^2=-60$\,dBm \cite{wang2020angle}.
The channel is assumed to be Rician fading, i.e., \cite{wang2020angle}
\begin{align}\label{ht}
&h_t=\sqrt{\varrho_0 d_0^{-\alpha}}\Big(\sqrt{\frac{K}{1+K}}\,g_t^{\mathrm{LOS}}
+\sqrt{\frac{1}{1+K}}\,g_t^{\mathrm{NLOS}}\Big),
\end{align}
where $\varrho_0=-30$\,dB is the pathloss at $1\,\mathrm{m}$, $d_0$ is computed based on the robot trajectory and the server position (around $10$\,m), and $\alpha=3$ is the pathloss exponent.
The Rician K-factor is set to $K=1$.
The LoS component is $g_t^{\mathrm{LOS}}=\mathrm{exp}\left(-\mathrm{j}\pi\,\mathrm{sin}\,\psi_t\right)$ with $\psi_t\in\mathcal{U}(-\pi,\pi)$ being the phase angle, and the non-LoS component is $g_t^{\mathrm{NLOS}}\sim\mathcal{CN}(0,1)$ \cite{wang2020angle}.

We compare GSRMR with APO to the following baselines: 1) \textbf{RoboMR}: Vanilla RoboMR \cite{li2024seamless} uploading all images (i.e., $\{x_t=1\}$); 2) \textbf{Ranking}: GSRMR with \eqref{ranking}; 3) \textbf{Rounding}: GSRMR by solving $\mathsf{P}_{\mathrm{GS}}$ with continuous relaxation plus rounding \cite{hubner2014rounding}
4) \textbf{Search}: GSRMR by solving $\mathsf{P}_{\mathrm{GS}}$ with iterative local search \cite{neumann2007randomized}; 5) \textbf{MaxRate}: GSRMR with water-filling power allocation \cite{yu2004iterative}; 6) \textbf{Fairness}: GSRMR with max-min fairness power allocation \cite{zheng2016wireless}; 7) \textbf{RoboGS}: Vanilla GS \cite{kerbl20233d} with no image uploading (i.e., $\{x_t=0\}$).

\begin{figure*}[!t]
	\centering
	\begin{subfigure}{0.47\linewidth} 
		\centering
		\includegraphics[width=\linewidth]{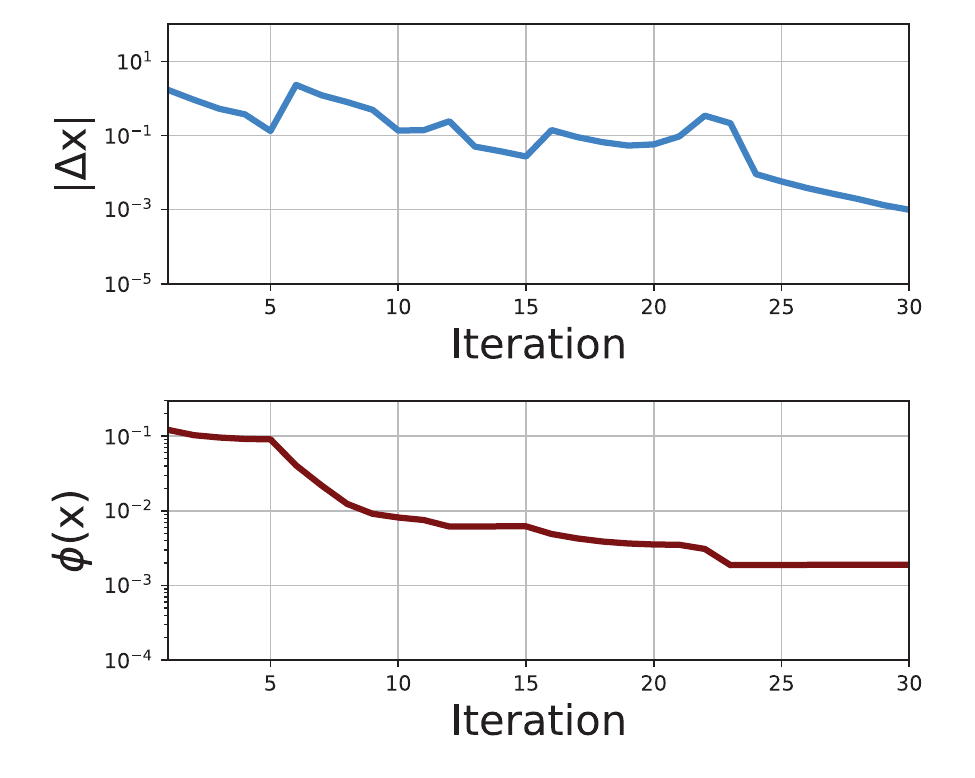}
      \vspace{-0.2in}
		\caption{$\|\Delta\mathbf{x}\|$ and $\phi(\mathbf{x})$ versus $n$ at $L_{\mathrm{th}}=0.03$.}
	\end{subfigure}
 	\begin{subfigure}{0.51\linewidth} 
		\centering
		\includegraphics[width=\linewidth]{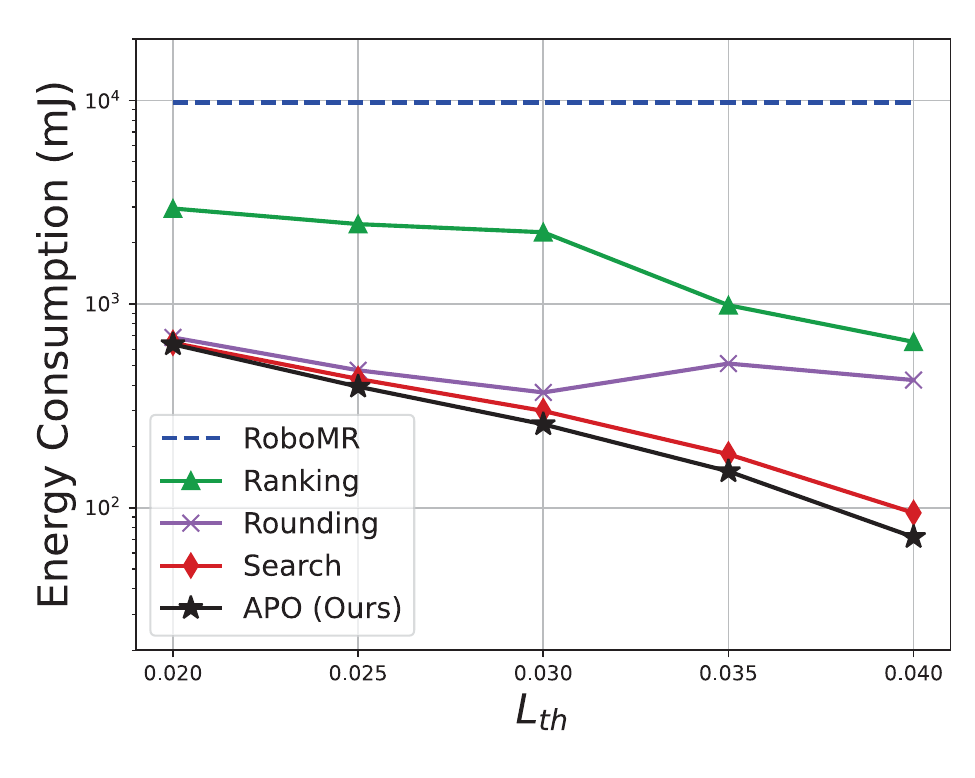}
      \vspace{-0.2in}
		\caption{Energy consumption versus $L_{\mathrm{th}}$.}
	\end{subfigure}
    \vspace{-0.05in}
     \caption{(a) Convergence behavior of APO. (b) Comparison of energy consumption.}
     \label{fig4}
\end{figure*}

\begin{table*}[!t]
    \centering
    \caption{Comparison between MaxRate, Fairness, and APO.}
    \vspace{-0.05in}
    \label{table1}
        \scalebox{0.95}{
    \begin{tabular}{ccccccccccccc}
        \toprule
        Metric & \multicolumn{3}{c}{Loss}  & \multicolumn{3}{c}{PSNR} &  \multicolumn{3}{c}{SSIM} \\
        \cline{2-4} \cline{5-7} \cline{8-10} 
        $L_{\mathrm{th}}$ & Fairness & MaxRate & APO (Ours) & Fairness & MaxRate & APO (Ours) & Fairness & MaxRate & APO (Ours) \\
        \midrule
        0.02 & $0.048$ & $0.04$ & $\mathbf{0.018}$ & $23.00$ & $27.69$ & $\mathbf{39.38}$ & $0.949$ & $0.958$ & $\mathbf{0.981}$ \\
        0.025 & $0.048$ & $0.044$ & $\mathbf{0.024}$ & $23.00$ & $25.29$ & $\mathbf{34.88}$ & $0.949$ & $0.954$ & $\mathbf{0.975}$ \\
        0.03 & $0.048$ & $0.047$ & $\mathbf{0.029}$ & $23.00$ & $23.41$ & $\mathbf{31.87}$ & $0.949$ & $0.950$ & $\mathbf{0.970}$ \\
        0.035 & $0.048$ & $0.048$ & $\mathbf{0.034}$ & $23.00$ & $23.11$ & $\mathbf{29.08}$ & $0.949$ & $0.949$ & $\mathbf{0.965}$ \\
        \bottomrule
    \end{tabular}
    }
    \vspace{-0.1in}
\end{table*}

First, we conduct numerical experiments to validate the convergence of the proposed APO algorithm.
Specifically, we consider the case of $L_{\mathrm{th}}=0.03$.
The variation between consecutive iterations $\|\Delta\mathbf{x}\|$ with $ \Delta\mathbf{x}=\mathbf{x}^{[n]}-\mathbf{x}^{[n-1]}$ versus the iteration count $n$ is shown in Fig.~\ref{fig4}a.
It can be seen that $\|\Delta\mathbf{x}\|$ falls below $10^{-3}$ after $30$ iterations.
This demonstrates the convergence of the proposed APO. 
The zero-one loss $\phi(\mathbf{x})=
\frac{1}{T}\sum_{t=1}^T x_{t}(1-x_t)$ versus the iteration count $n$ is also shown in Fig.~\ref{fig4}a. 
It can be seen that $\phi(\mathbf{x})$ is below $0.01$ after $10$ iterations, meaning that the APO algorithm converges to a point close to the desired binary solution. 
Based on this observation, we set $N=10$ in the subsequent experiments.

Next, to evaluate the energy saving brought by GS, we consider the case of $L_{\mathrm{th}}=\{0.02,0.025,0.03,0.035,0.04\}$.
The average energy consumption versus the MR loss requirement $L_{\mathrm{th}}$ is shown in Fig.~\ref{fig4}b. 
Compared to RoboMR, the energy saved by GSRMR is at least $10$\,dB, i.e., equal to $10$x power reduction.
Particularly, when the loss threshold is $0.04$, the energy saved by GSRMR is larger than $20$\,dB, i.e., equal to $100$x power reduction.
We refer to this quantity as \textbf{GS energy saving factor}, which is proportional to $L_{\mathrm{th}}$.

We also observe that as $L_{\mathrm{th}}$ increases, the energy consumption of all the simulated schemes are reduced, except for the Rounding scheme. This demonstrates the instability of the relax-and-round procedure for solving MINLP problems. 
Moreover, no matter how the loss threshold varies, the proposed GSRMR with APO consistently outperforms all the other schemes. 
Compared to the second-best scheme (i.e., the Search scheme), the energy reduction is over $20\%$ at $L_{\mathrm{th}}=0.04$. 
Their difference is more significant as $L_{\mathrm{th}}$ increases. 
This is because $\mathbf{x}$ is sparser under a slightly larger $L_{\mathrm{th}}$, and the Search method is prone to get stuck at local minimum. 
This implies that the proposed APO algorithm is more robust than the Search scheme.

To demonstrate the benefit brought by cross-layer optimization, we further compare APO with the MaxRate and Fairness schemes. 
These schemes represent the pure physical-layer scheme without considering MR requirements.\footnote{For fair comparison, we first execute the proposed GSRMR with APO and then obtain the associated energy consumption, which is used as the energy budget for the MaxRate and Fairness schemes.}
The quantitative results are shown in Table~\ref{table1}.
First, the proposed GSRMR with APO always achieves a qualified loss below $L_{\mathrm{th}}$.
Second, the proposed APO achieves a PSNR above $29$\,dB under all cases. 
Compared with the other schemes, the improvement of PSNR is at least $6$\,dB.
Lastly, the proposed GSRMR with APO outperforms all the other schemes in terms of the SSIM. 

Finally, qualitative comparisons with RoboMR and RoboGS at $L_{\mathrm{th}}=0.03$ are shown in Fig.~\ref{fig:demo}a.
It can be seen that images B and E are blurred under the RoboGS scheme. 
This demonstrates that the GS model involves discrepancies compared to the real-world environments.
In contrast, all the images of APO have high qualities and look similar to those of RoboMR (i.e., ground truth).
This corroborates Fig. \ref{fig:demo}b, where the RoboMR with APO opportunistically switches between uploading images and poses, and adjusts the robot transmit power. 
This implies that the GS prompts green RoboMR, but we need to carefully avoid GS discrepancies.

\begin{figure*}[!t]
	\centering
	\begin{subfigure}{0.69\linewidth} 
		\centering
		\includegraphics[width=1\linewidth]{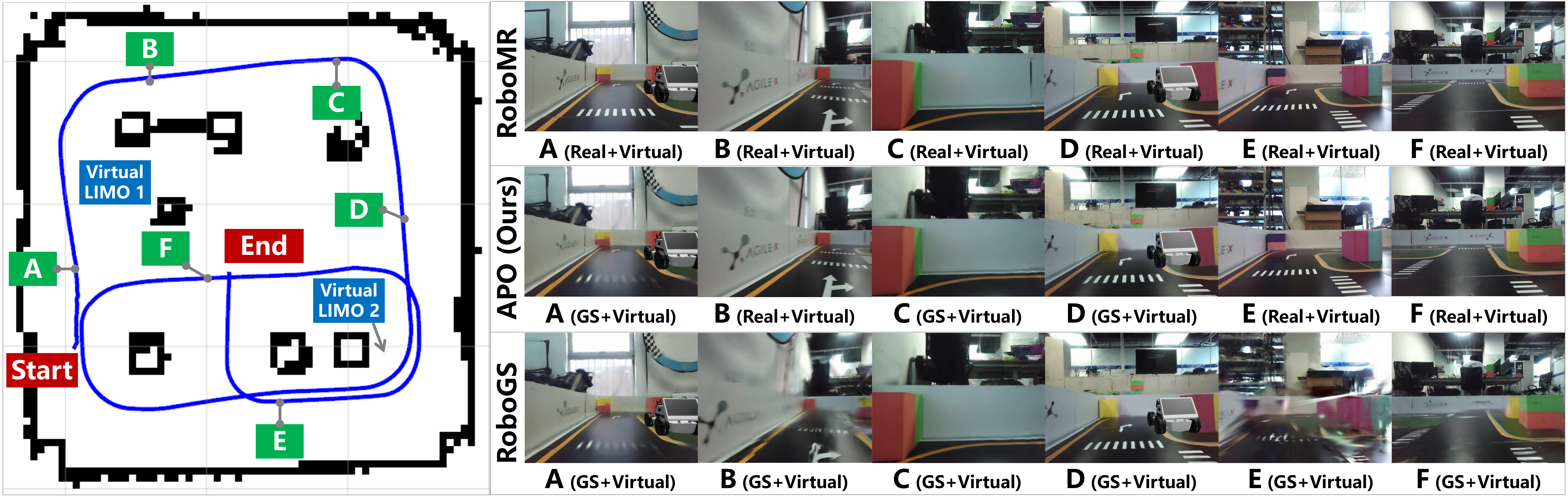}
		\caption{Camera positions and the associated qualitative comparison of images.}
	\end{subfigure}
 	\begin{subfigure}{0.3\linewidth}  
		\centering
		\includegraphics[width=1\linewidth]{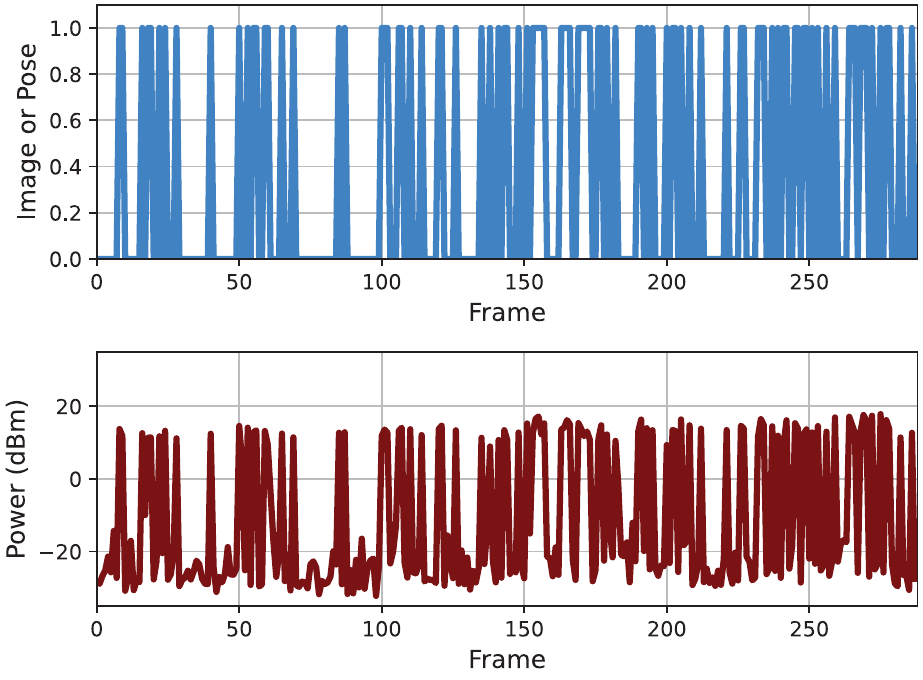}
		\caption{Content switching and power profile.}
	\end{subfigure}
    \vspace{-0.1in}
	\caption{(a) Left-hand side: The MR scenario, where the blue line represents robot trajectory, red boxes represent start and end positions, green boxes represent image poses, and blue boxes represent positions of virtual agents, respectively. Right-hand side: Qualitative comparison of 6 image frames. (b) Content switching $\{x_t\}$ and power profiles $\{p_t\}$.}
    \label{fig:demo}
\end{figure*}

\section{Conclusion}\label{section6}

This paper presented GSRMR, which realizes low-energy communication between the simulation server and robot terminal via the introduction of GS. 
The GSCLO framework was proposed to avoid GS discrepancies, which jointly optimizes the content switching and power allocation via a fast APO algorithm.
Various experiments were conducted, which demonstrate the effectiveness of GSRMR and the superiority of APO over extensive benchmarks.


\end{document}